\DeclareMathOperator*{\Exp}{\mathbb{E}}
\newcommand{\mur}[1]{{\bm{\mu} }_T^{[#1]}}
\newcommand{\emur}[1]{{\bm{\hat{\mu}}_T^{[#1]} }}
\newtheorem{thm}{Theorem}[section]
\newtheorem{prop}{Proposition}[section]
\begin{document}

%

%

\twocolumn[

\aistatstitle{An Improved Algorithm for Learning 
Drifting Discrete Distributions}

\aistatsauthor{ Alessio Mazzetto  }

\aistatsaddress{ Brown University }]

\begin{abstract}

We present a new adaptive algorithm for learning discrete distributions under distribution drift. In this setting, we observe a sequence of independent samples from a discrete distribution that is changing over time, and the goal is to estimate the current distribution. Since we have access to only a single sample for each time step, a good estimation requires a careful choice of the number of past samples to use. To use more samples, we must resort to samples further in the past, and we incur a drift error due to the bias introduced by the change in distribution. On the other hand, if we use a small number of past samples, we incur a large statistical error as the estimation has a high variance.  We present a novel adaptive algorithm that can solve this trade-off without any prior knowledge of the drift. Unlike previous adaptive results, our algorithm characterizes the statistical error using data-dependent bounds. This technicality enables us to overcome the limitations of the previous work that require a fixed finite support whose size is known in advance and that cannot change over time. Additionally, we can obtain tighter bounds depending on the complexity of the drifting distribution, and also consider distributions with infinite support.

\end{abstract}

\section{INTRODUCTION}

Estimating a distribution from a set of samples is a crucial challenge in data analysis and statistics \citep{Devroye1987NonparametricDE, silverman1986density, devroye2001combinatorial}. In this work, we focus on the classical setting of estimating the probability mass function of a discrete distribution.  A long list of work characterized the error for this estimation problem given \emph{independent} and \emph{identically distributed} (i.i.d.) samples from the same distribution \citep{han2015minimax, kamath2015learning, orlitsky2015competitive, jiao2015minimax, cohen2020learning}. The use of the total variation metric as a measure of error for this problem is a natural choice that is commonly adopted in the literature \citep{devroye2001combinatorial}. It is folklore that if a distribution has support size $k$, the maximum likelihood estimator with $n$ i.i.d. samples has an expected error upper bounded by $O( \sqrt{k/n})$, which can be shown to be tight \citep{anthony1999neural}.

In numerous applications where samples are collected over time, it is possible that their underlying distribution may change.  In the \emph{distribution drift} setting, we are interested in estimating the current distribution, given a sequence of past samples that could be generated by different distributions. This setting has been recently studied by \cite{mazzetto2023nonparametric}, but other problems have also been considered in a similar setting, for example, binary classification \citep[e.g.,][and references therein]{barve1996complexity, long1998complexity}, agnostic learning \citep{mohri2012new,hanneke2019statistical,mazzetto2023adaptive}, or crowdsourcing \citep{fu2020fast}.

Concretely, let $X_1,\ldots,X_T$ denote a sequence of $T$ \emph{independent} samples respectively from the discrete distributions $\bm{\mu}_1,\ldots,\bm{\mu}_T$. Equivalently, we say that the sequence of samples is generated over time by a \emph{drifting} discrete distribution. The goal is to estimate the current distribution $\bm{\mu}_T$ given the sequence of samples. Without loss of generality, it is sufficient to consider discrete distributions over the natural numbers, and given such a distribution $\bm{\mu}$, we let $\bm{\mu}(i) = \Pr_{\bm{\mu}}(X=i)$ for any $i \in \mathbb{N}$. The total variation distance between two discrete distributions $\bm{\mu}$ and $\bm{\eta}$ is defined as $\lVert \bm{\mu} - \bm{\eta} \rVert_{\mathrm{TV}} = (1/2)\sum_{i \in \mathbb{N}}| \bm{\mu}(i) - \bm{\eta}(i)|$.

Indeed, the estimation of $\bm{\mu}_T$ is possible only if the previous distributions are related to it. Let $\Delta_1,\ldots,\Delta_T$ be a non-decreasing sequence of real numbers where
\begin{align}
\label{eq:delta-definition}
\Delta_r \doteq \max_{t: 0 \leq t < r}\lVert \bm{\mu}_T - \bm{\mu}_{T-t} \rVert_{\mathrm{TV}} \enspace .
\end{align}
The value $\Delta_r$ is the maximum total variation distance from the current distribution $\bm{\mu}_T$ to any of the most recent $r$ distributions $\bm{\mu}_{T-r+1}, \ldots, \bm{\mu}_T$. In past work \citep{mazzetto2023nonparametric}, it is shown that if the distributions have the same support of size $k$, a tight lower bound on the expected error of \emph{any} algorithm for the estimation of the current distribution  $\bm{\mu}_T$ with respect to the total variation distance is given by
\begin{align}
\label{minimax-error}
\Omega \left( \min_{1 \leq r \leq T} \left[ \sqrt{\frac{k}{r}} + \Delta_r \right] \right) \enspace .
\end{align}
This lower bound is in a minimax sense. Formally, given any (possibly adaptive) algorithm, and a sufficiently small non-decreasing sequence of values $\Delta_1,\ldots,\Delta_T$, there exists a sequence of distributions $\bm{\mu}_1,\ldots,\bm{\mu}_T$ with shared support $k$ such that $\max_{t: 0 \leq t < r}\lVert \bm{\mu}_{T-t} -  \bm{\mu}_T\rVert_{\mathrm{TV}} \leq \Delta_{r}$ for any $1 \leq r \leq T$, and the expected estimation error of the algorithm is at least $\eqref{minimax-error}$. 

For a fixed integer $r$, the quantity $O(\sqrt{k/r} + \Delta_{r})$ of \eqref{minimax-error} is also an upper bound to the expected error obtained by only using the most recent $r$ samples for the estimation, and it is written as the sum of the upper bound to two errors: the \emph{statistical error} and the \emph{drift error}. The statistical error term $\sqrt{k/r}$ is related to the variance of the estimation, and it decreases by considering more samples; whereas the drift error term $\Delta_{r}$ is due to the distribution drift, and it can potentially increase by using samples further away in time. Equation \eqref{minimax-error} shows that an optimal estimation is given by the optimal solution of this trade-off. This trade-off determines an optimal number of recent samples to use as a function of the distribution drift. This is a significant difference with the i.i.d. setting, where each sample provides useful information, and the expected error goes to $0$ as the number of samples goes to infinity. 

The trade-off between the statistical error and the drift error is common in the literature on learning with drift \citep{mohri2012new,mazzetto2023nonparametric}. However, the minimization of this trade-off is challenging as the values $\Delta_1,\ldots,\Delta_T$ of the drift error are unknown, and they cannot be estimated from the data since we only have access to a single sample from each distribution. For this reason, most of the previous work required prior knowledge of the magnitude of the drift in order to quantify and solve this trade-off. For example, a common assumption is that the magnitude of the drift is bounded by $\Delta > 0$ at each step \citep{bartlett1992learning}, which implies  $\Delta_r \leq (r-1)\Delta$ for all 
$r \leq T$. In this case, the optimal solution of the trade-off gives an estimation error equal to $\Theta( (k \cdot \Delta)^{1/3})$ which is achieved by computing the empirical distribution induced by the most recent $\Theta( (k/\Delta^{2})^{1/3})$ samples. 

\subsection{Limitations of Existing Work}

In recent work, \cite{mazzetto2023adaptive} exhibit a general learning algorithm that solves the trade-off between statistical error and drift error (up to loglog factors) based on the input samples and without any prior knowledge of the drift. In our setting, this implies that there exists an algorithm that can adaptively attain the lower bound \eqref{minimax-error}, which cannot be improved in a minimax sense. Precisely, there exists an algorithm that observes the sequence $X_1,\ldots, X_T$ from a drifting distribution with a fixed support of size $k$, and it returns an estimate $\bm{\hat{\mu}}$ of $\bm{\mu}_T$ such that with probability at least $0.99$:
\begin{align}
\label{old-result-adaptive}
\lVert \bm{\mu}_T - \bm{\hat{\mu}} \rVert_{\mathrm{TV}} = O\left( \min_{1 \leq r \leq T}\left[ \sqrt{\frac{k}{r}} +  \sqrt{\frac{\log \log r}{r} } + \Delta_r \right] \right)
\end{align}
While the above error is essentially tight according to \eqref{minimax-error}, this result has several intertwined weaknesses.

First, the previous adaptive algorithm can only be applied to drifting distributions that have finite support, which cannot change over time. This constraint is in conflict with the drift setting, where the distribution, hence its support, can indeed change over time. For example, consider the drifting distribution of the items purchased over time from an online retailer: the support of this distribution can repeatedly evolve due to changes in inventory, new products, or availability. Additionally, we are also required to know the value $k$ of the support size. Since $k$ cannot be determined precisely using only the input samples, it is necessary to have prior knowledge of this value, which can be unfeasible in many practical applications.

Second, the aforementioned algorithm has the crucial shortcoming of using a distribution-independent upper bound $O(\sqrt{k/r})$ on the statistical error from using $r$ samples. While this upper bound is indeed tight for distributions that are roughly uniform over a support of size $k$, the actual error due to the variance of the estimation can be significantly smaller for other distributions. As an example, if we consider a distribution of size $k$, where most of the probability mass is concentrated in $k' \ll k$ elements, we would expect a statistical error with rate $O(\sqrt{k'/r})$. Furthermore, the use of a distribution-independent upper bound on the statistical error prevents the consideration of distributions with infinite support ($k = \infty$).
In the i.i.d. setting, it is possible to address this issue by using a sharp distribution-dependent upper bound on the statistical error, which also allows us to handle distributions with infinite support \citep{berend2013sharp,cohen2020learning}. In particular, if we assume that there is no drift, i.e. $\bm{\mu} = \bm{\mu}_1 = \ldots = \bm{\mu}_T$, the maximum likelihood  estimator $\bm{\hat{\mu}}$ over $T$ samples exhibits an expected error \citep{berend2013sharp}:
\begin{align}
\label{eq:iid-real}
    \frac{1}{8}\Lambda_{T}(\bm{\mu}) - 
 \frac{1}{4\sqrt{T}} \leq \Exp \lVert \bm{\mu} - \bm{\hat{\mu}} \rVert_{\mathrm{TV}} \leq \Lambda_{T}(\bm{\mu}) \enspace ,
\end{align}
where
\begin{align}
\label{eq:complexity-discrete}
    \Lambda_T(\bm{\mu}) \doteq \sum_{i : \bm{\mu}(i) < 1/T} \bm{\mu}(i) + \frac{1}{\sqrt{T}}\sum_{i : \bm{\mu}(i) \geq  1/T}\sqrt{\bm{\mu}(i)} \hspace{1pt} .
\end{align}
The value $\Lambda_{T}(\bm{\mu})$ is a measure of the \emph{learning complexity} of $\bm{\mu}$ that provides a tight characterization of the variance of the estimation of $\bm{\mu}$ with $T$ samples. By using the Cauchy-Schwarz inequality, it is simple to verify that if the support of $\bm{\mu}$ has size $k$, it holds that $\Lambda_{r}(\bm{\mu}) \leq \sqrt{k/r}$, recovering the aforementioned distribution-independent upper bound.
We highlight that having a tight bound on the statistical error is especially important in a drift setting as it can significantly impact the quality of the estimation, since the value of this bound determines the number of samples to use in order to solve the trade-off between statistical error and drift error. 

\section{MAIN RESULT}
In our work, we address the issues outlined in the previous section. Our main contribution is to provide an adaptive algorithm for estimating an \emph{arbitrary} drifting discrete distribution. The result is formalized as follows.
\begin{thm}
\label{thm:main-result}
Let $\delta \in (0,1)$. There exists an algorithm that given $X_1,\ldots,X_T$, it outputs a distribution $\bm{\hat{\mu}}$ such that with probability at least $1-\delta$, it holds that
\begin{align*}
 \lVert \bm{\mu}_T - \bm{\hat{\mu}} \rVert_{\mathrm{TV}}  = O\bigg(  \min_{1 \leq r \leq T} \bigg[\Lambda_{r}(\bm{\mu}_T)    \\ +\sqrt{\frac{\log((\log^2r+1)/\delta)}{r}}  + 
 \Delta_r \bigg]\bigg) \enspace,
\end{align*}
where $\Delta_r = \max_{0 \leq t < r}\lVert \bm{\mu}_T - \bm{\mu}_{T-t} \rVert_{\mathrm{TV}}$ as in \eqref{eq:delta-definition}.
\end{thm}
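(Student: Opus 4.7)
I would follow the Goldenshluger--Lepski adaptive template developed in \cite{mazzetto2023adaptive}, but replace the distribution-independent penalty $\sqrt{k/r}$ with a \emph{data-dependent} surrogate for $\Lambda_r(\bm{\mu}_T)$. For $j=0,1,\ldots,\lfloor\log_2 T\rfloor$ let $r_j = 2^j$, let $\emur{r_j}$ denote the empirical distribution on the last $r_j$ samples, and write $\bar{\bm{\mu}}^{[r]} \doteq r^{-1}\sum_{t=T-r+1}^T \bm{\mu}_t$ for the averaged underlying distribution over the window of length $r$.

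\textbf{Step 1 (high-probability statistical error).} Via the triangle inequality,
\[
\lVert \emur{r} - \bm{\mu}_T \rVert_{\mathrm{TV}} \leq \lVert \emur{r} - \bar{\bm{\mu}}^{[r]} \rVert_{\mathrm{TV}} + \Delta_r.
\]
Since swapping one sample changes the TV distance by at most $1/r$, McDiarmid's inequality yields $\lVert \emur{r} - \bar{\bm{\mu}}^{[r]} \rVert_{\mathrm{TV}} \leq \Exp\lVert \emur{r} - \bar{\bm{\mu}}^{[r]} \rVert_{\mathrm{TV}} + O(\sqrt{\log(1/\delta)/r})$ with probability $1-\delta$. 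Extending the second-moment argument of \cite{berend2013sharp} to independent but non-identically distributed samples bounds the expectation by $\Lambda_r(\bar{\bm{\mu}}^{[r]})$. A perturbation estimate for $\Lambda_r$, using $\lVert \bar{\bm{\mu}}^{[r]} - \bm{\mu}_T \rVert_{\mathrm{TV}} \leq \Delta_r$ and handling the threshold $1/r$ by a case split, then gives $\Lambda_r(\bar{\bm{\mu}}^{[r]}) = O\!\left(\Lambda_r(\bm{\mu}_T) + \Delta_r\right)$.

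\textbf{Step 2 (data-dependent surrogate).} Because $\Lambda_r(\bm{\mu}_T)$ is not observable, I would use the plug-in estimator $\hat{\Lambda}_r = \Lambda_r(\emur{r})$, possibly with slightly inflated thresholds around $1/r$ to tolerate boundary effects. A coordinate-wise Bernstein bound, summed over the support, shows that with high probability $\hat{\Lambda}_r$ is sandwiched between constant multiples of $\Lambda_r(\bm{\mu}_T)$ plus an $O(\sqrt{\log(1/\delta)/r} + \Delta_r)$ slack. Combining with Step 1 produces a purely data-dependent upper confidence bound $\hat{U}_r$ such that, with probability $1-\delta_r$,
\[
\lVert \emur{r} - \bm{\mu}_T \rVert_{\mathrm{TV}} \leq \hat{U}_r + \Delta_r, \qquad \hat{U}_r = O\!\left(\Lambda_r(\bm{\mu}_T) + \sqrt{\log(1/\delta_r)/r}\right).
\]

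\textbf{Step 3 (Lepski selection and union bound).} I select $\hat{r}$ as the largest $r_j$ for which
\[
\lVert \emur{r_j} - \emur{r_i} \rVert_{\mathrm{TV}} \leq c\!\left(\hat{U}_{r_i} + \hat{U}_{r_j}\right) \qquad \text{for all } i \leq j,
\]
where $c$ is a suitable constant. If $r^\star$ is the oracle minimizer appearing in the theorem, Step 2 implies the test passes for every $r_j \leq r^\star$, so $\hat{r} \geq r^\star$; a final triangle inequality among $\emur{\hat{r}}$, $\emur{r^\star}$, and $\bm{\mu}_T$ together with the test at level $r^\star$ yields the claimed error. A union bound over the $O(\log T)$ candidate windows with $\delta_j \propto \delta/(j+1)^2$ absorbs to the $\log((\log^2 r+1)/\delta)$ factor stated in the theorem. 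The main obstacle I expect is Step 2: $\Lambda_r$ is piecewise-defined at the threshold $1/r$, so coordinates with $\bm{\mu}_T(i)$ near $1/r$ may be misclassified by the plug-in, and one must show these boundary coordinates contribute at most $O(\sqrt{\log(1/\delta)/r})$ in aggregate; handling infinite support on top of this requires carefully bounding unseen tail mass by the variance of $\emur{r}$ rather than its support size.
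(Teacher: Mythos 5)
Your overall architecture (dyadic windows, an empirical complexity term as the confidence width, pairwise TV comparisons between windows, union bound with $\delta_j\propto\delta/(j+1)^2$, and a perturbation bound for $\Lambda_r$ under drift) matches the paper's. The genuine gap is in Step~3, and it is exactly the point the paper flags as the reason the standard Lepski/Goldenshluger--Lepski template from \cite{mazzetto2023adaptive} does not carry over to data-dependent widths. Your claim ``the test passes for every $r_j\le r^\star$, so $\hat r\ge r^\star$'' is false in general: $r^\star$ is the minimizer of statistical error plus drift, and nothing forces $\Delta_{r}\lesssim \hat U_{r}$ for $r\le r^\star$. For instance, if $\Delta_r\approx 0.1$ for all $r\ge 2$ while the statistical widths at large $r$ are tiny, the oracle minimizer can be $r^\star=T$, yet two intermediate windows $r_i<r_j\le r^\star$ can have $\lVert \emur{r_i}-\emur{r_j}\rVert_{\mathrm{TV}}$ of order $0.1\gg c(\hat U_{r_i}+\hat U_{r_j})$, so your test fails below $r^\star$ and $\hat r<r^\star$. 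The classical way to repair this (compare not to $r^\star$ but to the largest $r$ at which drift is still dominated by the width, then argue the oracle cannot do much better) needs to know the rate at which the width decreases, which is available for $\sqrt{k/r}$ but not for a data-dependent $\hat U_r$. The paper replaces this with a two-sided argument: a ``continue'' proposition showing that if all pairwise tests pass the new window is within a constant factor of every previous one, and a ``stop'' proposition showing that a failed test certifies a drift lower bound $\Delta_{r_j}\ge \xi_{r_\ell}$, hence $U(r_\ell)\le 2U(r_n)$ for all $n\ge j$, so stopping early is also near-optimal. Your proposal only argues one direction and therefore does not control the error when the test fails before $r^\star$. A related unaddressed issue: your $\hat U_r$ is not monotone in $r$ (the paper stresses this), so with the rule ``largest $r_j$ passing all tests'' the final bound $c(\hat U_{\hat r}+\hat U_{r^\star})+U(r^\star)$ contains $\hat U_{\hat r}$, which may exceed $\hat U_{r^\star}$; the paper handles this by restricting to the subsequence $L$ along which the width strictly decreases and returning only windows in $L$.

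A secondary soft spot is Step~2. The plug-in $\Lambda_r(\emur{r})$ with coordinate-wise Bernstein bounds summed over the support is exactly where infinite support and the $1/r$ threshold bite, as you note yourself. The paper avoids a two-sided sandwich entirely: it uses the empirical half-norm functional $\Phi_r(\emur{r})=r^{-1/2}\sum_i\sqrt{\emur{r}(i)}$, needs only the one-sided bounds $\lVert\emur{r}-\mur{r}\rVert_{\mathrm{TV}}\le \Phi_r(\emur{r})+O(\sqrt{\log(1/\delta)/r})$ (from \citealp{cohen2020learning}, extended to non-identically distributed samples) and $\Phi_r(\emur{r})\le 4\Lambda_r(\mur{r})+O(\sqrt{\log(1/\delta)/r})$, and gets concentration from McDiarmid applied to $\Phi_r$ (bounded differences $2/r$), which is insensitive to the support size. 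Finally, to compare against non-dyadic $r^\star$ you would also need the scaling inequality $\Lambda_r(\bm{\mu})\le\sqrt{s/r}\,\Lambda_s(\bm{\mu})$ in addition to the perturbation bound you mention; this is minor, but it is part of what makes the paper's final comparison to arbitrary $1\le r\le T$ go through.
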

The above theorem shows that there exists an adaptive algorithm that can achieve (up to loglog factors) an optimal solution of the trade-off between statistical error and drift error, where the statistical error is quantified using a distribution-dependent measure of complexity. Compared to the previous adaptive result \eqref{old-result-adaptive}, our algorithm works for an \emph{arbitrary} discrete drifting distribution, and it utilizes a sharper distribution-dependent upper bound on the statistical error that we estimate from the data. In particular, our algorithm does not require any prior knowledge of the drifting distribution, and it also works for drifting distribution with support that changes over time or with infinite support.  For the special case of drifting distributions with shared support of size $k$, it holds that $\Lambda_{r}(\bm{\mu}_T) \leq \sqrt{k/r}$, and we indeed achieve the lower bound \eqref{minimax-error} up to logarithmic terms.
However, we highlight that our algorithm uses a distribution-dependent measure $\Lambda_{r}(\bm{\mu}_T)$ to upper bound the statistical error of the estimation which could be significantly tighter than $\sqrt{k/r}$ even for distributions with support size $k$. This enables us to obtain tighter bounds depending on the drifting distribution's complexity.

    Since $\Lambda_{r}(\bm{\mu}) \leq \Lambda_{s}(\bm{\mu})$ for any $s < r$, in the i.i.d. case, we can observe that the theorem guarantees with high-probability (e.g., $\geq 99/100$) an estimation such that
\begin{align*}
    \lVert \bm{\mu}_T - \bm{\hat{\mu}} \rVert_{\mathrm{TV}} = O\left( \Lambda_{T}(\bm{\mu}_T) + \sqrt{\frac{\log\log^2 T}{T}} \right) \enspace ,
\end{align*}
retrieving up to logarithmic terms the tight characterization of the estimation error depicted in \eqref{eq:iid-real} for learning with $T$ i.i.d. samples from $\bm{\mu}_T$. The additional logarithmic term is the cost of the adaptivity, since the algorithm does not know a priori whether the samples are identically distributed.


\textbf{Technical contribution.} It is not straightforward to extend the adaptive strategy of the previous work \citep{mazzetto2023adaptive} to use a data-dependent bound on the statistical error. In fact, the proof strategy of that work relies on knowing the exact rate at which the upper bound on the statistical error decreases, which is not possible for distribution-dependent upper bounds. To circumvent this issue, we develop a new analysis for learning with drift, whose proof of correctness also uses a novel result that ties the magnitude of the drift with the change in the learning complexity of the drifting distribution. We believe that our novel proof strategy for learning with drift using data-dependent upper bounds on the statistical error can also be applied to other learning problems. To the best of our knowledge, this is the first adaptive learning algorithm for discrete distributions to use data-dependent bounds in the drift setting.

\section{ALGORITHM}

In this section, we present the algorithm that achieves the guarantee of \Cref{thm:main-result}. First, we formally define the trade-off between statistical error and drift error obtained by using the most recent $r$ samples. To this end, for any $1 \leq r \leq T$, we define the following distributions
\begin{align*}
    \mur{r}(i) &= \frac{1}{r} \sum_{t = T-r+1}^T \bm{\mu}_t(i)   &\forall i \in \mathbb{N}& \enspace , \\ \emur{r}(i) &= \frac{1}{r} \sum_{t = T-r+1}^T \mathbf{1}_{\{X_t = i\}}  &\forall i \in \mathbb{N}& \enspace, 
\end{align*}
which are respectively the average distribution $\mur{r}$ and the empirical distribution $\emur{r}$ over the most recent $r$ samples. 
Following the methodology of previous work, the following proposition provides an error decomposition into statistical error and drift error for the estimation of $\bm{\mu}_T$ by using the empirical distribution $\emur{r}$.
\begin{prop}
\label{prop:error-decomposition}
Let $1 \leq r \leq T$. We have that
\begin{align*}
\hspace{-2pt}\lVert \bm{\mu}_T - \emur{r} \rVert_{\mathrm{TV}} \leq \underbrace{\lVert \mur{r} - \emur{r} \rVert_{\mathrm{TV}}}_{\text{ \hspace{-5pt} \normalfont Statistical Error}} \hspace{5pt}+\hspace{5pt}  \underbrace{\Delta_r}_{\text{ \hspace{-5pt} \normalfont Drift Error}} \enspace .
\end{align*}
\end{prop}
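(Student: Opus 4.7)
The plan is to prove this by a standard triangle-inequality decomposition that inserts the average distribution $\mur{r}$ as an intermediate point between the target $\bm{\mu}_T$ and the empirical estimate $\emur{r}$. Explicitly, I would write
\begin{align*}
\lVert \bm{\mu}_T - \emur{r}\rVert_{\mathrm{TV}} \leq \lVert \bm{\mu}_T - \mur{r}\rVert_{\mathrm{TV}} + \lVert \mur{r} - \emur{r}\rVert_{\mathrm{TV}},
\end{align*}
so the second term is already the claimed statistical error, and it only remains to show $\lVert \bm{\mu}_T - \mur{r}\rVert_{\mathrm{TV}} \leq \Delta_r$.

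For the bias term, I would use the definition $\mur{r} = (1/r)\sum_{t=0}^{r-1} \bm{\mu}_{T-t}$ together with the fact that $\bm{\mu}_T = (1/r)\sum_{t=0}^{r-1}\bm{\mu}_T$, and then apply the convexity (equivalently, the triangle inequality for a weighted average) of the total variation distance in each argument:
\begin{align*}
\lVert \bm{\mu}_T - \mur{r}\rVert_{\mathrm{TV}} = \Bigl\lVert \tfrac{1}{r}\sum_{t=0}^{r-1}(\bm{\mu}_T - \bm{\mu}_{T-t})\Bigr\rVert_{\mathrm{TV}} \leq \tfrac{1}{r}\sum_{t=0}^{r-1}\lVert \bm{\mu}_T - \bm{\mu}_{T-t}\rVert_{\mathrm{TV}} \leq \max_{0 \leq t < r}\lVert \bm{\mu}_T - \bm{\mu}_{T-t}\rVert_{\mathrm{TV}} = \Delta_r,
\end{align*}
where the first inequality follows coordinate-wise from the absolute-value triangle inequality in the sum defining total variation, and the last equality is the definition \eqref{eq:delta-definition}.

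There is no real obstacle here: the statement is essentially an unpacking of the triangle inequality combined with convexity of $\lVert \cdot \rVert_{\mathrm{TV}}$. The only mild subtlety is justifying that swapping the sum through the norm produces an inequality rather than an equality, which is immediate from $|\sum a_i| \leq \sum |a_i|$ applied per coordinate. Combining the two displays yields the claimed decomposition.
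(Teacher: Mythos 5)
Your proposal is correct and matches the paper's own argument: the paper also inserts $\mur{r}$ via the triangle inequality and then bounds $\lVert \bm{\mu}_T - \mur{r}\rVert_{\mathrm{TV}}$ by averaging $\lVert \bm{\mu}_T - \bm{\mu}_{T-t}\rVert_{\mathrm{TV}}$ over the window and bounding by $\Delta_r$. No gaps; nothing further is needed.
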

\begin{proof}
By using the triangle inequality, we have that
\begin{align*}
\lVert \bm{\mu}_T - \emur{r} \rVert_{\mathrm{TV}} \leq {\lVert \mur{r} - \emur{r} \rVert_{\mathrm{TV}}} +  {\lVert \bm{\mu}_T - \mur{r} \rVert_{\mathrm{TV}}}
\end{align*}
We can upper bound the second addend of the right-hand side above using the triangle inequality
\begin{align}
\label{eq:chain-inequality-drift}
    \lVert  \mur{r} - \bm{\mu}_T  \rVert_{\mathrm{TV}} 
    &= \left\lVert \frac{1}{r}\sum_{t=T-r+1}^T( \bm{\mu}_{t} - \bm{\mu}_T )\right\rVert_{\mathrm{TV}} \nonumber  \\
& \hspace{-20pt}  \leq  \hspace{5pt}  \frac{1}{r}\sum_{t=T-r+1}^T \lVert \bm{\mu}_t - \bm{\mu}_T \rVert_{\mathrm{TV}} 
\leq \Delta_r \enspace .
\end{align}
\end{proof}
The first term of the right-hand side of this proposition represents the statistical error of the estimation. Our goal is to measure this error as a function of the distribution-dependent measure of complexity $\Lambda_r(\bm{\mu}_T)$ of the current distribution $\bm{\mu}_T$, which is unknown. Our algorithm relies on the input data to estimate this quantity. Following the example of previous work, we use the empirical counterpart of this measure of complexity. Given an empirical distribution $\emur{r}$, we define 
\begin{align*}
    \Phi_r(\emur{r}) \doteq \sqrt{\frac{\lVert \emur{r} \rVert_{\frac{1}{2}}}{r}} = \frac{1}{\sqrt{r}} \sum_{i \in \mathbb{N}} \sqrt{\emur{r}(i)} \enspace ,
\end{align*}
and we observe that this quantity can be computed from the samples. The quantity $\Phi_r(\emur{r})$ is an empirical measure of complexity that provides an upper 
bound to the statistical error with $r$ samples. The next proposition is proven based on results of previous work \citep{cohen2020learning}.
\begin{prop}
\label{ub:statistical-error-empirical}
Let $\delta \in (0,1)$ and $1 \leq r \leq T$. With probability at least $1 - \delta$, it both holds:
\begin{align*}
\lVert  \emur{r} - \mur{r}\rVert_{\mathrm{TV}} \leq \Phi_r\left( \emur{r} \right) + 3 \sqrt{ \frac{\log(4/\delta)}{2r}} \enspace 
\end{align*}
and
\begin{align*}
\Phi_r\left( \emur{r} \right) \leq 4\Lambda_r\left( \mur{r} \right) + \sqrt{ \frac{\log(4/\delta)}{r}} 
\end{align*}
\end{prop}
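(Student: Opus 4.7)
The plan is to prove each of the two inequalities with failure probability at most $\delta/2$ and then combine by a union bound. Both are concentration statements about a function of the independent (but not necessarily identically distributed) samples $X_{T-r+1},\ldots,X_T$, so the natural tool is McDiarmid's bounded-differences inequality. The preliminary observation I would record is that swapping a single sample shifts two of the counts $n_i = r\,\emur{r}(i)$ by $\pm 1$, which implies that both $\lVert \emur{r} - \mur{r}\rVert_{\mathrm{TV}}$ and $\Phi_r(\emur{r}) = (1/r)\sum_i \sqrt{n_i}$ have bounded-difference constants of order $1/r$ (for the latter, using that $|\sqrt{n+1}-\sqrt{n}| \leq 1$ for integer $n \geq 0$).

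For the second inequality I would apply McDiarmid directly to $\Phi_r(\emur{r})$ and upper bound its expectation by a constant multiple of $\Lambda_r(\mur{r})$; after taking $\epsilon = O(\sqrt{\log(1/\delta)/r})$ this yields the claimed form. The expectation bound follows by splitting the sum $(1/\sqrt{r})\sum_i \Exp \sqrt{\emur{r}(i)}$ according to whether $\mur{r}(i) \geq 1/r$. On the high-mass side, Jensen's inequality gives $\Exp \sqrt{\emur{r}(i)} \leq \sqrt{\mur{r}(i)}$, which produces exactly the second piece of $\Lambda_r(\mur{r})$; on the low-mass side, the integrality bound $\sqrt{n_i} \leq n_i$ gives an expected contribution of at most $\Exp n_i / r = \mur{r}(i)$, which produces the first piece of $\Lambda_r(\mur{r})$. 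The constant $4$ in the statement is slack I expect to absorb when invoking the corresponding i.i.d.\ estimate from \citep{cohen2020learning} as a black box rather than optimizing constants.

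For the first inequality I would apply McDiarmid to the centered quantity $\lVert \emur{r}-\mur{r}\rVert_{\mathrm{TV}} - \Phi_r(\emur{r})$ and show that its expectation is non-positive, so that the one-sided deviation produces the stated $3\sqrt{\log(4/\delta)/(2r)}$ slack. Non-positivity of the expectation is essentially the content of an i.i.d.\ estimate from \citep{cohen2020learning}, and the main obstacle I anticipate is transferring it to our independent-non-identically-distributed setting. I expect this to go through because the relevant moment computations depend only on the first and second moments of $n_i = \sum_{t=T-r+1}^T \mathbf{1}[X_t=i]$, which is a sum of independent Bernoullis with the same behavior as in the i.i.d.\ case: $\Exp n_i = r\,\mur{r}(i)$ and $\Var(n_i) \leq r\,\mur{r}(i)$. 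Once these two ingredients and the bounded-differences constants are in hand, the proof reduces to a routine union bound, and the data-dependent nature of the bound is inherited from the fact that the right-hand side uses $\Phi_r(\emur{r})$ rather than the (unknown) $\Lambda_r(\mur{r})$.
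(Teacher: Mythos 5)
Your proposal follows essentially the same route as the paper: the second inequality via the expectation computation ($\Exp[r\,\emur{r}(i)] = r\,\mur{r}(i)$ by independence, Jensen on the high-mass symbols, $\sqrt{n}\leq n$ on the low-mass ones) combined with McDiarmid's inequality, and the first inequality by transferring the proof of Theorem~2.1 of \citet{cohen2020learning} (McDiarmid applied to the centered difference), which is exactly what the paper does, since that proof only uses independence of the samples. The only minor caveat is your claim that the expectation of the centered quantity is non-positive: it suffices (and is what the transfer actually yields) that it is $O(1/\sqrt{r})$, which the $3\sqrt{\log(4/\delta)/(2r)}$ slack absorbs, so the structure of your argument is unaffected.
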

\begin{proof}
In the appendix.
\end{proof}
This proposition enables us to use the empirical measure of complexity $\Phi_r\left( \emur{r} \right)$ to estimate the statistical error while guaranteeing that we obtain a result that is a tight approximation to using the non-empirical measure $\Lambda_r\left( \mur{r} \right)$ as in \eqref{eq:iid-real}. \Cref{ub:statistical-error-empirical} can be seen as a generalization of the results of \citet[Theorem~2.1 and Theorem~2.3]{cohen2020learning} for independent but not identically distributed discrete distributions.

On a high level, the core of our algorithm is a condition that allows us to compare the upper bound to the estimation error induced by different choices of the number of past samples \emph{without} explicitly estimating the drift error. For ease of notation, we let $r_j \doteq 2^j$ for any $j \geq 0$.
As long as this condition is true, the algorithm iteratively considers a larger number of past samples, also referred to as \emph{window size}, starting from $r_0=1$. In particular, at iteration $j$ it considers a window size $r_j$ (starting from $j=0$), and it evaluates the condition by comparing the estimation obtained with $r_j$ to the estimation obtained with $r_0, \ldots r_{j-1}$. If the condition is satisfied, we can provably maintain a solution whose upper bound to the estimation error is up to a constant factor as good as any previously considered window size (Proposition~\ref{prop:continue-condition}). Conversely, if the condition is violated, a non-negligible drift has occurred, thus using more samples cannot yield a significantly better estimation (Proposition~\ref{prop:stop-condition}). As we will see, depending on the data, it is possible that we do not need to consider all the possible window sizes $\{ r_j : j \geq 0 \}$ for the evaluation of this condition, and only a subset will suffice. 

\begin{algorithm2e}
\LinesNumbered
\label{algo:adaptive-algorithm}
\caption{Adaptive Learning Algorithm For A Discrete Drifting Distribution }
$L = \{ 0 \}$ \\
\For{$j=1,\ldots,\lfloor \log_2 T\rfloor$}{
    \If{$\xi_{r_j} < \min_{\ell \in L}\xi_{r_\ell}$}{
        \For{$\ell \in L$}{
            \If{$\lVert \emur{r_{\ell}}- \emur{r_{j}} \rVert_{\mathrm{TV}} \geq 3\xi_{r_\ell} +\xi_{r_{j}}$}{
                \Return $\emur{r_{\max L}}$
            }
        }
        $L \gets L \cup \{ j \}$
    }
}
\Return $\emur{r_{\max L}}$
\end{algorithm2e}

The correctness of our algorithm is conditioned on the event that the estimation for all window sizes $r_j = 2^j$ for $j \geq 0$ concentrates around its expectation, i.e. we want to provide an upper bound to the statistical error as in \Cref{ub:statistical-error-empirical} for all those window sizes. This result is formalized in the next proposition and it is obtained by simply taking a union bound.

\begin{prop}
\label{prop:well-estimated-all}
Let $\delta \in (0,1)$.  With probability at least $1-\delta$, for all $j \geq 0$ it holds that 
\begin{align}
\label{eq:statistical-error-empirical}
\lVert  \emur{r_j} - \mur{r_j}\rVert_{\mathrm{TV}} \leq \Phi_{r_j}\left( \emur{r_j} \right) + 3 \sqrt{ \frac{\log\left(\frac{c(\log^2 r_j+1)}{\delta}\right)}{r_j}} \enspace 
\end{align}
and
\begin{align}
\label{eq:statistical-error-true}
\Phi_{r_j}\left( \emur{r_j} \right) \leq 4\Lambda_{r_j}\left( \mur{r_j} \right) +3 \sqrt{ \frac{\log\left(\frac{c(\log^2 r_j+1)}{\delta}\right)}{r_j}}
\end{align}
where $c$ is a constant equal to $c = 4 \pi^2/3$.
\end{prop}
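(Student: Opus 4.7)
The statement is a one-shot union-bound corollary of \Cref{ub:statistical-error-empirical}. The plan is to apply \Cref{ub:statistical-error-empirical} once for each $j \geq 0$ with $r_j \leq T$, using a failure probability $\delta_j$ that decays fast enough in $j$ for the union bound to close, but slowly enough that $\log(1/\delta_j)$ scales as the target term $\log((\log^2 r_j+1)/\delta)$. Since \Cref{ub:statistical-error-empirical} guarantees both of its inequalities simultaneously on an event of mass $1-\delta_j$, a single union bound over $j$ suffices.

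Concretely, I would set $\delta_j \doteq \delta / \bigl( c (\log^2 r_j + 1) \bigr)$ with $c = 4\pi^2/3$. Under the $j$-th event, \Cref{ub:statistical-error-empirical} gives the two inequalities with $\delta_j$ in place of $\delta$, and substituting
\begin{align*}
\log(4/\delta_j) = \log\bigl(4 c (\log^2 r_j + 1)/\delta\bigr)
\end{align*}
produces bounds matching \eqref{eq:statistical-error-empirical}--\eqref{eq:statistical-error-true} up to the harmless extra factor $\log 4$ inside the logarithm. Because $c(\log^2 r_j + 1)/\delta \geq c \geq 4$ for any $\delta \in (0,1)$, we have $\log(4y) \leq 2\log y$ at $y = c(\log^2 r_j + 1)/\delta$, and this is absorbed by the leading coefficient $3$ of each square root (the bound of \Cref{ub:statistical-error-empirical} carries coefficients $3/\sqrt{2}$ and $1$ respectively, both dominated by $3\sqrt{\log y/r_j}$ after the substitution).

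The main computation is verifying the total failure probability. Using $r_j = 2^j$ and $\log^2 r_j = j^2(\log 2)^2$,
\begin{align*}
\sum_{j \geq 0} \delta_j
= \frac{\delta}{c} \sum_{j \geq 0} \frac{1}{j^2 (\log 2)^2 + 1}
\leq \frac{\delta}{c}\left( 1 + \frac{1}{(\log 2)^2}\sum_{j \geq 1}\frac{1}{j^2}\right)
= \frac{\delta}{c}\left(1 + \frac{\pi^2}{6(\log 2)^2}\right),
\end{align*}
which is at most $\delta$ (comfortably) for the choice $c = 4\pi^2/3$.

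Since the argument is a routine union-bound extension of \Cref{ub:statistical-error-empirical}, I do not expect any serious obstacle. The only design choice is the form of $\delta_j$, which is picked precisely so that $\log(1/\delta_j)$ matches the target $\log((\log^2 r_j + 1)/\delta)$ while the tail $\sum_j 1/(\log^2 r_j + 1)$ converges; everything else is substitution and constant bookkeeping.
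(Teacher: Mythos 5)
Your proof is correct and takes essentially the same route as the paper: apply \Cref{ub:statistical-error-empirical} at each dyadic window size with a summable per-scale failure probability and union bound. The only (immaterial) difference is the choice of $\delta_j$ --- you take $\delta_j = \delta/(c(\log^2 r_j+1))$ so the target logarithm appears directly, while the paper uses $\delta_j = (6/\pi^2)\delta/(j+1)^2$ and then absorbs the resulting $\log$ term into $\log(c(\log^2 r_j+1)/\delta)$; your constant bookkeeping (the $\log(4y)\le 2\log y$ absorption and the convergence of $\sum_j \delta_j$) checks out.
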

\begin{proof}
Let $\delta_j = \delta (6/\pi^2)/(j+1)^2$. We have that with probability at least $1-\delta_j$, the event of \Cref{ub:statistical-error-empirical} holds with error probability $\delta_j$ and window size $r_j$. The statement follows by substituting the definition of $\delta_j$, and taking a union bound over all possible events for $j \geq 0$, since $\sum_{j \geq 0} \delta_j = (6/\pi^2)\delta\sum_{j \geq 1} 1/j^2 = \delta$.
\end{proof} 
The value $\delta \in (0,1)$ of the above proposition is a parameter of the algorithm, and it denotes its failure probability. Throughout this section, we assume that the event of \Cref{prop:well-estimated-all} holds, otherwise our algorithm fails (with probability $\leq \delta$). We denote with 
\begin{align*}
\xi_{r_j} \doteq \Phi_{r_j}\left( \emur{r_j} \right) +  3 \sqrt{ \frac{\log(c(\log^2 r_j+1)/\delta)}{r_j}} \enspace 
\end{align*}
the upper bound to the statistical error for the window size $r_j$ given by the empirical bound \eqref{eq:statistical-error-empirical} in \Cref{prop:well-estimated-all}.
By using \Cref{prop:error-decomposition}, we have the following upper bound to the error of estimating $\bm{\mu}_T$ by using the empirical distribution $\emur{r_j}$:
\begin{align}
\label{ub-xi}
    U(r_j) &\doteq \xi_{r_j} + \Delta_{r_j} \geq 
     \lVert \bm{\mu}_T - \emur{r_j} \rVert_{\mathrm{TV}} ,\hspace{10pt}  \forall j \geq 0 .
\end{align}

The algorithm is looking for the window size $r_j$ that minimizes the upper bound $U(r_j)$. We remark that this is a challenging problem due to the fact that the drift error is unknown and it cannot be estimated from the data, since we have access to a single sample from each distribution. Additionally, the sequence $( \xi_j)_{j \geq 0}$ is not necessarily strictly decreasing, and it does not have an analytical closed formula that depends only on $r_j$. This is a requirement of the proof strategy of the previous adaptive work \cite{mazzetto2023adaptive}, which exploits the a priori knowledge of how much the statistical error is reduced by doubling the window size. Therefore, their proof strategy does not apply to our setting.

By an inspection of $U(\cdot)$, it is clear that if $\xi_{j+n} > \xi_j$ for some $n \geq 1$, then $U(r_j) < U(r_{j+n})$ since the drift error is non decreasing with respect to the window size. Thus, we can only consider a sequence of window sizes for which the upper bound to the statistical error is decreasing. Formally, we consider the following sequence of indexes $L = (\ell_1,\ldots,\ell_K)$ where $K \leq 1 + \log_{2}(T)$ that is built iteratively as follows. We let $\ell_1 = 0$. Given $\ell_1,\ldots,\ell_i$, the element $\ell_{i+1}$ is added (if it exists) as the first index $j > \ell_i$ such that $\xi_{j} < \xi_{\ell_{i}}$, thus we have $\xi_{\ell_{i+1}} < \xi_{\ell_{i}}$ for any $1 \leq i \leq K-1$. Moreover, given $j \geq 0$, we let $\gamma(j) = \ell$ be the largest value $\ell \in L$ such that $\ell \leq j$. Observe that by construction, the following relation holds:
\begin{align}
\label{eq:L-property}
U(r_{\gamma(j)}) \leq U(r_{j})    \enspace .
\end{align}
Therefore, we have $
    \min_{\ell \in L} U(r_{\ell}) \leq \min_{1 \leq j \leq T}U(r_j) $,
and we can only consider window sizes $r_{\ell}$ with values $\ell \in L$.

The pseudo-code of the algorithm is reported in Algorithm~\ref{algo:adaptive-algorithm}. The algorithm iteratively builds the list $L$. Once a new element $j$ that belongs to this list is found (Line~3), the algorithm compares the empirical distribution $\emur{r_j}$ with all the empirical distributions $\emur{r_\ell}$ with $\ell \in L$ such that $\ell < j$. This comparison provides the iteration condition that is the core of our algorithm. The statistical  error $\xi_{r_j}$ using $r_j$ samples is less than $\xi_{r_{\ell}}$ for all such $\ell$, however the drift error could be larger. The main idea is that if all those empirical distributions are sufficiently close (Line~5), then their distance with respect to $\bm{\mu}_T$ cannot be that large, and we can guarantee that the estimation error obtained by using $\emur{r_j}$ is as good as the estimation achieved with any $\emur{r_\ell}$ for all such $\ell$. In this case, we can keep iterating. This intuition is formalized with the following proposition.
\begin{prop}
\label{prop:continue-condition}
Assume that the event of \Cref{prop:well-estimated-all} holds. Let $j \in L$. If 
\begin{align*}
\lVert \emur{r_{\ell}}- \emur{r_{j}} \rVert_{\mathrm{TV}} \leq 3\xi_{r_\ell} + \xi_{r_j}\hspace{10pt} \forall \ell < j : \ell \in L
\end{align*}
then, we have that:
\begin{align*}
    \lVert \bm{\mu}_T - \emur{r_{j}} \rVert_{\mathrm{TV}} \leq 5 \cdot \min_{\substack{\ell \in L :\\ \ell < j}} U(r_\ell) \enspace .
\end{align*}
\begin{proof}
Let $\ell \in L$ such that $\ell < j$. By using the triangle inequality, we have that:
\begin{align*}
     \lVert \bm{\mu}_T - \emur{r_{j}} \rVert_{\mathrm{TV}} \leq   \lVert \bm{\mu}_T - \emur{r_{\ell}} \rVert_{\mathrm{TV}} +  \lVert  \emur{r_{\ell}} - \emur{r_{j}} \rVert_{\mathrm{TV}} \enspace .
\end{align*}
We upper bound the first term of the right-hand side using \eqref{ub-xi}, and the second term by using the assumption of this proposition. We obtain:
\begin{align*}
     \lVert \bm{\mu}_T - \emur{r_{j}} \rVert_{\mathrm{TV}} &\leq \xi_{r_\ell} + \Delta_{r_\ell} + \xi_{r_\ell} + 3\xi_{r_j} \\
     &\leq 5 \xi_{r_\ell} + \Delta_{r_{\ell}} \\
     &\leq 5 \cdot U(r_\ell) \enspace .
\end{align*}
\end{proof}
\end{prop}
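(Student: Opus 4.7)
The plan is to fix an arbitrary $\ell \in L$ with $\ell < j$ and bound $\lVert \bm{\mu}_T - \emur{r_j}\rVert_{\mathrm{TV}}$ by a constant multiple of $U(r_\ell)$; minimizing over such $\ell$ will then give the claim. The natural starting point is the triangle inequality routed through the reference empirical distribution $\emur{r_\ell}$:
\[
\lVert \bm{\mu}_T - \emur{r_j}\rVert_{\mathrm{TV}} \leq \lVert \bm{\mu}_T - \emur{r_\ell}\rVert_{\mathrm{TV}} + \lVert \emur{r_\ell} - \emur{r_j}\rVert_{\mathrm{TV}},
\]
which separates a ``reference'' error at window size $r_\ell$ from an empirical comparison term that the hypothesis already controls.

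Next, I would bound each piece with the tools already at hand. For the first term, \Cref{prop:error-decomposition} combined with the event of \Cref{prop:well-estimated-all} gives $\lVert \bm{\mu}_T - \emur{r_\ell}\rVert_{\mathrm{TV}} \leq \xi_{r_\ell} + \Delta_{r_\ell} = U(r_\ell)$, since $\xi_{r_\ell}$ is by definition the empirical high-probability upper bound on $\lVert \mur{r_\ell} - \emur{r_\ell}\rVert_{\mathrm{TV}}$. For the second term, the hypothesis of the proposition is exactly tailored to yield $\lVert \emur{r_\ell} - \emur{r_j}\rVert_{\mathrm{TV}} \leq 3\xi_{r_\ell} + \xi_{r_j}$ directly.

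The final ingredient comes from the construction of $L$ itself: since $j \in L$ is only added when $\xi_{r_j} < \min_{\ell \in L}\xi_{r_\ell}$, we have $\xi_{r_j} < \xi_{r_\ell}$ for every $\ell \in L$ with $\ell < j$. Substituting and collapsing,
\[
\lVert \bm{\mu}_T - \emur{r_j}\rVert_{\mathrm{TV}} \leq U(r_\ell) + 3\xi_{r_\ell} + \xi_{r_j} \leq 4\xi_{r_\ell} + \xi_{r_j} + \Delta_{r_\ell} \leq 5\xi_{r_\ell} + \Delta_{r_\ell} \leq 5\, U(r_\ell),
\]
and taking the minimum over $\{\ell \in L : \ell < j\}$ finishes the argument.

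I do not expect any serious obstacle here; the proof is a direct triangle-inequality chain. The one subtle point to flag is that the asymmetric coefficients $3\xi_{r_\ell}$ and $\xi_{r_j}$ in the iteration condition of the algorithm are not accidental: they are calibrated precisely so that the collapse in the last display only requires the one-sided comparison $\xi_{r_j} < \xi_{r_\ell}$ that the list $L$ enforces by construction, whereas no reverse inequality between $\xi_{r_j}$ and $\xi_{r_\ell}$ is ever available.
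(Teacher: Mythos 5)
Your proof is correct and follows essentially the same route as the paper: triangle inequality through $\emur{r_\ell}$, the bound $\lVert \bm{\mu}_T - \emur{r_\ell}\rVert_{\mathrm{TV}} \leq U(r_\ell)$ from \eqref{ub-xi}, the hypothesis for the comparison term, and the property $\xi_{r_j} < \xi_{r_\ell}$ guaranteed by the construction of $L$. Your write-up is if anything slightly cleaner, since you make explicit the use of $\xi_{r_j} < \xi_{r_\ell}$, which the paper's displayed chain uses implicitly.
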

Conversely, we want to show that if one of the conditions in Line~5 of the algorithm is violated, then we can stop iterating. If there exists a distribution $ \emur{r_\ell}$ that is far enough from $\emur{r_j}$ for a $ \ell \in L$ such that $\ell < j$, then a significant distribution drift must have occurred. In particular, we can show a lower bound to $\Delta_{r_j} \geq \xi_{r_\ell}$, and thus $U(r_n) \geq \xi_{r_\ell}$ for any $n \geq j$ due to the drift error. Since $U(r_{\ell}) = \xi_{r_\ell} +\Delta_{r_\ell}$, and the drift error from using $r_{\ell}$ samples is less or equal to the drift error from using $r_n$ samples, we are able to conclude that $U(r_{n})$ cannot be significantly smaller than $U(r_{\ell})$, and in particular $U(r_\ell) \leq 2U(r_n)$. This provides a certificate that we can stop iterating since the window size $r_{\ell}$ provides a value of the upper bound  $U(\cdot)$ that is up to constant as good as the one obtained with any window size $r_n$ with $n \geq j$. This result is formalized with the following proposition.

\begin{prop}
\label{prop:stop-condition}
Assume that the event of \Cref{prop:well-estimated-all} holds. Let $j \in L$. If there exists $\ell \in L$ with $\ell < j$ such that
\begin{align*}
\lVert \emur{r_{\ell}}- \emur{r_{j}} \rVert_{\mathrm{TV}} \geq 3\xi_{r_\ell} +\xi_{r_{j}} \enspace ,
\end{align*}
then $U(r_\ell) \leq 2U(r_{n})$ for any $n \geq j$.
\begin{proof}
By the triangle inequality, we have that
\begin{align}
& \lVert \emur{r_{\ell}}- \emur{r_{j}}\rVert_{\mathrm{TV}} \nonumber \\ \leq \hspace{10pt} &\lVert \emur{r_{\ell}}- \mur{r_{\ell}} \rVert_{\mathrm{TV}} + \lVert \emur{r_{j}}- \mur{r_{j}} \rVert_{\mathrm{TV}}  \nonumber \\  +&\lVert \mur{r_{\ell}}- \mur{r_{j}} \rVert_{\mathrm{TV}}  \nonumber \\
\leq \hspace{10pt} &  \xi_{r_j} + \xi_{r_\ell} + \lVert \mur{r_{\ell}}- \mur{r_{j}} \rVert_{\mathrm{TV}}  \enspace , \label{eq:tmp1}
\end{align}
where the last inequality follows from the assumption that the event of \Cref{prop:well-estimated-all} holds. Using the triangle inequality again, we obtain 
\begin{align*}
    \lVert \mur{r_{\ell}}- \mur{r_{j}} \rVert_{\mathrm{TV}} &\leq \lVert \mur{r_{\ell}}- \bm{\mu}_T \rVert_{\mathrm{TV}} +  \lVert \mur{r_{j}}- \bm{\mu}_T \rVert_{\mathrm{TV}} \\ 
    & \leq \Delta_{r_\ell} + \Delta_{r_j} \leq 2\Delta_{r_j} \enspace ,
\end{align*}
where in the last two inequalities we used relation \eqref{eq:chain-inequality-drift} and the fact that the drift error is non-decreasing with the number of past samples. By combining the above upper bound with \eqref{eq:tmp1}, we have
\begin{align*}
    \lVert \emur{r_{\ell}}- \emur{r_{j}}\rVert_{\mathrm{TV}}  \leq  \xi_{r_j} + \xi_{r_\ell} + 2\Delta_{r_j} \enspace .
\end{align*}
We use the assumption of the proposition and obtain the following lower bound to the drift error 
\begin{align*}
    \Delta_{r_j} \geq \xi_{r_\ell} \enspace .
\end{align*}
For any $n \geq j$, we have that
\begin{align*}
    2U(r_n) &= 2\xi_{r_n} + 2\Delta_{r_n} \geq   2\Delta_{r_j}  
    \geq \xi_{r_\ell} + \Delta_{r_\ell} = U(r_{\ell}) \enspace .
\end{align*}
\end{proof}
\end{prop}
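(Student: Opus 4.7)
The plan is to convert the hypothesized large empirical discrepancy into a lower bound on the drift $\Delta_{r_j}$, and then exploit monotonicity of $\Delta_r$ in the window size to compare $U(r_\ell)$ with $U(r_n)$ for all $n \geq j$.

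First I would apply the triangle inequality to split $\lVert \emur{r_\ell} - \emur{r_j} \rVert_{\mathrm{TV}}$ into three pieces by inserting the population averages $\mur{r_\ell}$ and $\mur{r_j}$: the two statistical deviations $\lVert \emur{r_\ell} - \mur{r_\ell} \rVert_{\mathrm{TV}}$, $\lVert \emur{r_j} - \mur{r_j} \rVert_{\mathrm{TV}}$, and a bias gap $\lVert \mur{r_\ell} - \mur{r_j} \rVert_{\mathrm{TV}}$. Since the event of \Cref{prop:well-estimated-all} is assumed to hold throughout, the two statistical deviations are bounded above by $\xi_{r_\ell}$ and $\xi_{r_j}$ respectively. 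For the bias gap, I would route through the current distribution $\bm{\mu}_T$ via one more triangle inequality and invoke the calculation \eqref{eq:chain-inequality-drift} on each half, obtaining $\Delta_{r_\ell} + \Delta_{r_j}$; because $\ell < j$ implies $r_\ell < r_j$ and $\Delta_r$ is nondecreasing in $r$, this is at most $2\Delta_{r_j}$.

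Combining these upper bounds with the hypothesis gives
$$3\xi_{r_\ell} + \xi_{r_j} \;\leq\; \xi_{r_\ell} + \xi_{r_j} + 2\Delta_{r_j},$$
which simplifies to the crux inequality $\Delta_{r_j} \geq \xi_{r_\ell}$: after subtracting the controllable statistical noise, a large empirical discrepancy certifies a large true drift. To finish, for any $n \geq j$ I would use monotonicity of the drift to write $\Delta_{r_n} \geq \Delta_{r_j} \geq \xi_{r_\ell}$ and $\Delta_{r_\ell} \leq \Delta_{r_j} \leq \Delta_{r_n}$, and conclude
$$2U(r_n) \;=\; 2\xi_{r_n} + 2\Delta_{r_n} \;\geq\; 2\Delta_{r_j} \;\geq\; \xi_{r_\ell} + \Delta_{r_\ell} \;=\; U(r_\ell).$$

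I expect the only genuine subtlety to be the bookkeeping of the constants. The coefficient $3$ on $\xi_{r_\ell}$ in the hypothesis is chosen so that, after one copy of $\xi_{r_\ell}$ is absorbed into the statistical deviation bound on $\emur{r_\ell}$, exactly $2\xi_{r_\ell}$ remains to match the $2\Delta_{r_j}$ coming from the bias gap, producing a clean drift lower bound $\Delta_{r_j} \geq \xi_{r_\ell}$ with no slack. A smaller constant would fail to certify any drift, while a larger one would inflate the $2$ in the final conclusion $U(r_\ell) \leq 2U(r_n)$ and weaken the overall approximation factor of the algorithm.
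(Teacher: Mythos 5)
Your proposal is correct and follows essentially the same route as the paper's proof: the same three-term triangle-inequality decomposition, the same detour through $\bm{\mu}_T$ to bound the bias gap by $2\Delta_{r_j}$, the same derivation of $\Delta_{r_j} \geq \xi_{r_\ell}$, and the same final comparison $2U(r_n) \geq 2\Delta_{r_j} \geq \xi_{r_\ell} + \Delta_{r_\ell} = U(r_\ell)$. No gaps to report.
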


\Cref{prop:continue-condition} and \Cref{prop:stop-condition} can be used to prove that our algorithm finds a window size $\hat{r} = 2^j$ for some $j \geq 0$ such that $U(\hat{r}) = \min_{i}U(r_i)$. We can express this upper bound using the measure of complexity $\Lambda_{\hat{r}}(\cdot)$ thanks to \eqref{eq:statistical-error-true}. However, this is not sufficient to prove \Cref{thm:main-result}, since we are only considering window sizes $r$ that are powers of two, and we want to compare against any possible selection of the window size $1 \leq r \leq T$. For window sizes that are not power of two, \Cref{prop:well-estimated-all} does not provide direct information on the statistical error. To prove the theorem, we need to relate the magnitude of the drift with the change in complexity of the drifting distribution. The following proposition provides this result.
\begin{prop}
\label{prop:relation-variance-drift}
Let $\bm{\mu}$ and $\bm{\eta}$ be two discrete distributions over $\mathbb{N}$. For any integers $1 \leq r \leq s$, the following two inequalities hold:
\begin{align*}
|\Lambda_{r}(\bm{\mu}) - \Lambda_{r}(\bm{\eta})| &\leq 2\lVert \bm{\mu} - \bm{\eta} \rVert_{\mathrm{TV}} \enspace , \\
\frac{\Lambda_r(\bm{\mu})}{\Lambda_s(\bm{\mu})} &\leq \sqrt{s/r} \enspace .
\end{align*}
\end{prop}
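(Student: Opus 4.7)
The plan is to reduce both inequalities to scalar statements about the function that generates $\Lambda_r$ termwise. Define $f_r \colon [0,1] \to \mathbb{R}$ by $f_r(p) = p$ for $p < 1/r$ and $f_r(p) = \sqrt{p/r}$ for $p \geq 1/r$, so that $\Lambda_r(\bm{\mu}) = \sum_{i \in \mathbb{N}} f_r(\bm{\mu}(i))$. The two pieces agree at the threshold $p = 1/r$ (both evaluate to $1/r$), so $f_r$ is continuous on $[0,1]$.

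For the first inequality, I would prove that $f_r$ is $1$-Lipschitz. On the lower piece the derivative is $1$, and on the upper piece it equals $1/(2\sqrt{pr}) \leq 1/2$ because $p \geq 1/r$. Applying this Lipschitz bound termwise gives
\[
|\Lambda_r(\bm{\mu}) - \Lambda_r(\bm{\eta})| \leq \sum_{i \in \mathbb{N}} |f_r(\bm{\mu}(i)) - f_r(\bm{\eta}(i))| \leq \sum_{i \in \mathbb{N}} |\bm{\mu}(i) - \bm{\eta}(i)| = 2\lVert \bm{\mu} - \bm{\eta} \rVert_{\mathrm{TV}},
\]
which is the desired bound.

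For the second inequality, I would partition $\mathbb{N}$ using the two thresholds $1/s \leq 1/r$: let $A = \{i : \bm{\mu}(i) < 1/s\}$, $B = \{i : 1/s \leq \bm{\mu}(i) < 1/r\}$, and $C = \{i : \bm{\mu}(i) \geq 1/r\}$, and split both $\Lambda_r(\bm{\mu})$ and $\Lambda_s(\bm{\mu})$ along this partition. On $A$ both contributions equal $\sum_{i \in A} \bm{\mu}(i)$, and since $\sqrt{s/r} \geq 1$ the $A$-part is trivially dominated by $\sqrt{s/r}$ times itself. On $C$ the contributions are $\tfrac{1}{\sqrt{r}} \sum_{i \in C} \sqrt{\bm{\mu}(i)}$ and $\tfrac{1}{\sqrt{s}} \sum_{i \in C} \sqrt{\bm{\mu}(i)}$, whose ratio is exactly $\sqrt{s/r}$. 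The delicate region is $B$: $\Lambda_r$ contributes $\bm{\mu}(i)$ while $\Lambda_s$ contributes $\sqrt{\bm{\mu}(i)/s}$. Using $\bm{\mu}(i) < 1/r$, one has $\bm{\mu}(i) \leq \sqrt{\bm{\mu}(i)/r} = \sqrt{s/r}\cdot\sqrt{\bm{\mu}(i)/s}$, so the $B$-contribution to $\Lambda_r$ is bounded by $\sqrt{s/r}$ times the $B$-contribution to $\Lambda_s$. Adding the three parts yields $\Lambda_r(\bm{\mu}) \leq \sqrt{s/r}\, \Lambda_s(\bm{\mu})$.

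Neither step presents a real obstacle. The points to be careful about are the behavior of $f_r$ at the non-smooth kink $p = 1/r$ for the Lipschitz argument (handled by a one-sided derivative bound), and the elementary inequality $p \leq \sqrt{p/r}$ for $p \leq 1/r$, which is what makes the $B$-region estimate work for the second inequality.
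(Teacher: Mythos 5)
Your proof is correct. For the second inequality you use exactly the paper's argument: the same three-region partition at the thresholds $1/s$ and $1/r$ and the same key estimate $\bm{\mu}(i) \leq \sqrt{\bm{\mu}(i)/r} = \sqrt{s/r}\,\sqrt{\bm{\mu}(i)/s}$ on the middle region. For the first inequality your route is organized differently: you isolate the scalar function $f_r(p)$ ($=p$ below $1/r$, $=\sqrt{p/r}$ above), check continuity at the kink and a derivative bound of $1$ on each piece, and conclude a termwise $1$-Lipschitz bound $|f_r(\bm{\mu}(i)) - f_r(\bm{\eta}(i))| \leq |\bm{\mu}(i)-\bm{\eta}(i)|$, whereas the paper partitions $\mathbb{N}$ into the four sets $S_{00},S_{01},S_{10},S_{11}$ according to which side of $1/r$ each of $\bm{\mu}(i),\bm{\eta}(i)$ falls on and bounds each block separately. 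The mathematical content is the same (the paper's four cases are precisely a hands-on verification of your termwise Lipschitz claim), but your packaging is tidier: it handles both signs at once (giving the absolute value directly, where the paper bounds one direction and relies on symmetry), and it replaces the mixed-threshold case analysis with a single one-line calculus fact. The paper's version, on the other hand, makes the elementary inequalities used in each regime completely explicit without invoking differentiability. Either way the constant $2$ comes from the normalization $\lVert \bm{\mu}-\bm{\eta}\rVert_{\mathrm{TV}} = \tfrac12\sum_i|\bm{\mu}(i)-\bm{\eta}(i)|$, which you use correctly.
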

\begin{proof}
We start by proving the first inequality. We partition $\mathbb{N}$ into four sets:
$S_{00} = \{i : \bm{\mu}(i) < 1/r \land \bm{\eta}(i) < 1/r \}$, 
$S_{01} = \{i : \bm{\mu}(i) < 1/r \land \bm{\eta}(i) \geq 1/r \}$,
$S_{10} = \{i : \bm{\mu}(i) \geq 1/r \land \bm{\eta}(i) < 1/r \}$, and $ 
S_{11}= \{i : \bm{\mu}(i) \geq 1/r \land \bm{\eta}(i) \geq 1/r \}$. We have the following decomposition
\begin{align*}
&\Lambda_r(\bm{\mu}) - \Lambda_r(\bm{\eta}) =  \sum_{i \in S_{00}} \left(\bm{\mu}(i) - \bm{\eta}(i)\right) \\ 
&+\sum_{i \in S_{01}} \left(\bm{\mu}(i) - \sqrt{\frac{\bm{\eta}(i)}{r}}\right)+\sum_{i \in S_{10}} \left(\sqrt{\frac{\bm{\mu}(i)}{r}} - \bm{\eta}(i)\right)
\\&+\frac{1}{\sqrt{r}}\sum_{i \in S_{11}} \left(\sqrt{\bm{\mu}(i)} - \sqrt{\bm{\eta}(i)}\right) \enspace .
\end{align*}
The sum over $S_{00}$ is upper bounded simply as
\begin{align*}
    \sum_{i \in S_{00}} \left(\bm{\mu}(i) - \bm{\eta}(i)\right) \leq \sum_{i \in S_{00}} \left|\bm{\mu}(i) - \bm{\eta}(i) \right| \enspace . \\
\end{align*}
For the sum over $S_{11}$, we have that
\begin{align*}
     \sum_{i \in S_{11}} \frac{\left[\sqrt{\bm{\mu}(i)} - \sqrt{\bm{\eta}(i)}\right]}{\sqrt{r}} &= \frac{1}{\sqrt{r}} \sum_{i \in S_{11}}  \frac{\left(\bm{\mu}(i) - \bm{\eta}(i)\right)}{\sqrt{\bm{\mu}(i)}+\sqrt{\bm{\eta}(i)}} \\
    &\leq \frac{\sqrt{r}}{2\sqrt{r}}\sum_{i \in S_{11}} | \bm{\mu}(i) - \bm{\eta}(i)| \\
    &=\sum_{i \in S_{11}} \frac{| \bm{\mu}(i) - \bm{\eta}(i)|}{2} \enspace ,
\end{align*} 
where in the inequality we used the fact that $\sqrt{\bm{\mu}(i)}$ and $\sqrt{\bm{\eta}(i)}$ are both at least $\sqrt{1/r}$ for any $i \in S_{11}$.
Now, observe that for $i \in S_{01}$, we have that  $\bm{\mu}(i) = \sqrt{\bm{\mu}(i)}\cdot \sqrt{\bm{\mu}(i)} \leq \sqrt{\bm{\mu}(i)/r}$.
Using this inequality and proceeding similarly to the previous case, we have that
\begin{align*}
    \sum_{i \in S_{01}} \left(\bm{\mu}(i) - \sqrt{\frac{\bm{\eta}(i)}{r}}\right) &\leq \sum_{i \in S_{01}} \left(\sqrt{\frac{\bm{\mu}(i)}{r}} - \sqrt{\frac{\bm{\eta}(i)}{r}}\right) \\
    &\leq \sum_{i \in S_{01}} | \bm{\mu}(i) - \bm{\eta}(i)| \enspace .
\end{align*}
For $i \in S_{10}$, we observe that $\bm{\mu}(i) > 1/r$, hence $1/\sqrt{r} < \sqrt{\bm{\mu}(i)}$. Therefore, it holds that $\sqrt{\bm{\mu}(i)/r} < \sqrt{\bm{\mu}(i)} \cdot \sqrt{\bm{\mu}(i)} < \bm{\mu}(i)$, and
\begin{align*}
    \sum_{i \in S_{10}} \left(\sqrt{\frac{\bm{\mu}(i)}{r}} - \bm{\eta}(i)\right) \leq \sum_{i \in S_{10}} \left(\bm{\mu}(i) -  \bm{\eta}(i) \right) \enspace .
\end{align*}
We can conclude:
\begin{align*}
    \Lambda_r(\bm{\mu}) - \Lambda_r(\bm{\eta}) \leq \sum_{i \in \mathbb{N}}| \bm{\mu}(i) - \bm{\eta}(i)| \leq  2\lVert \bm{\mu} - \bm{\eta} \rVert_{\mathrm{TV}} \enspace .
\end{align*}

To prove the second inequality, we use instead the following partition of $\mathbb{N}$ into $S_{0} = \left\{ i: \bm{\mu}(i) < 1/s \right\}$, $S_{1} = \left\{ i: 1/s \leq \bm{\mu}(i) < 1/r \right\} $, and $
S_{2} = \{ i :  1/r \leq \bm{\mu}(i)  \}$. For any $i \in S_1$, we have that $\bm{\mu}(i) \leq  \sqrt{\bm{\mu}(i)/r} = \sqrt{s/r} \sqrt{\bm{\mu}(i)/s}$. We obtain the following result:
\begin{align*}
{\Lambda_r(\bm{\mu})} &=  \sum_{i \in S_0}\bm{\mu}(i) +  \sum_{i \in S_1 } \bm{\mu}(i) + \sum_{i \in S_2} \sqrt{\frac{\bm{\mu}(i)}{r}} \\
& \leq \sum_{i \in S_0}\bm{\mu}(i) + \sqrt{\frac{s}{r}}\sum_{i \in S_1} \sqrt{\frac{\bm{\mu}(i)}{s}} + \sqrt{\frac{s}{r}}\sum_{i \in S_2} \sqrt{\frac{\bm{\mu}(i)}{s}} \\
& \leq \sqrt{\frac{s}{r}} \Lambda_s(\bm{\mu}) \enspace .
\end{align*}
\end{proof}

We can finally prove \Cref{thm:main-result}.
\begin{proof}[Proof of~\Cref{thm:main-result}]
We assume that the event of \Cref{prop:well-estimated-all} holds, otherwise we say that our algorithm fails (with probability $\leq \delta$). The algorithm returns an empirical distribution $\emur{r_j}$ for some $j \geq 0$, and it guarantees that
\begin{align*}
    \lVert \bm{\mu}_T - \emur{r_j} \rVert_{\mathrm{TV}} \leq U(r_j)  \enspace .
\end{align*}
Consider the function $Q(r) : \{1,\ldots,T\} \mapsto \mathbb{R}$ defined as
\begin{align*}
    Q(r) \doteq
    \Lambda_r( \bm{\mu}_T) + \sqrt{\frac{\log(c(\log^2 r+1)/\delta)}{r}}  + \Delta_r \enspace ,
\end{align*}
where $c$ is the same constant in \Cref{prop:well-estimated-all}. Let $r^* = \mathrm{argmin}_{1 \leq r \leq T}Q(r)$. In order to prove the theorem, it is sufficient to show that $U(r_j) = O( Q(r^*))$. Let $\gamma$ be defined as in \eqref{eq:L-property}, and let $i \geq 0$ be such that $\gamma(r^*) = r_{i}$. We distinguish two cases: $(a)$ $r_i \leq r_{j}$ and $(b)$ $r_i > r_j$. For both cases, we will use the following result:
\begin{align}
\label{eq:relation-optimal}
 U(r_i) = O( Q(r^*)) \enspace .
\end{align}
Equation \eqref{eq:relation-optimal} is proven as follows.
Let $n$ be the largest integer such that $2^n \leq r^*$. By construction, we have $\gamma(r_n) = r_i$, thus inequality \eqref{eq:L-property} shows us that $U(r_i) \leq U(r_n)$. By definition,  $U(r_n) = \xi_{r_n} + \Delta_{r_n}$, and 
\begin{align}
\label{eq:123}
   \xi_{r_n} &= \Phi(\emur{r_n}) + 3  \sqrt{\frac{\log (c(\log^2 r_n +1)/\delta)}{r_n}} \nonumber \\
    & \leq 4 \Lambda_{r_n}( \mur{r_n}) + 12 \sqrt{\frac{\log (c(\log^2 r^* +1)/\delta)}{r^*}} \enspace , 
\end{align}
where we used \eqref{eq:statistical-error-true} and the fact that $r^* \leq 2 r_n$. Through, \Cref{prop:relation-variance-drift} we obtain the following upper bound
\begin{align*}
 &\Lambda_{r_n}( \mur{r_n}) \\ \leq \hspace{5pt}&  \Lambda_{r_n}( \bm{\mu}_T) + 2 \Delta_{r_n}\\ \leq  \hspace{5pt}& \sqrt{r^*/r_n} \cdot \Lambda_{r^*}( \bm{\mu}_T)+ 2 \Delta_{r_j} \\
  \leq \hspace{5pt}&  4 \Lambda_{r^*}( \bm{\mu}_T) + 2 \Delta_{r^*},
\end{align*}
where in the second inequality we also used \eqref{eq:chain-inequality-drift}, and in the last inequality we used the fact that the drift error is non-decreasing and that $r^* \leq 2 r_n$. If we combine the above inequality with $\eqref{eq:123}$, we have that
\begin{align*}
    U(r_i) \leq U(r_n) &\leq 16 \Lambda_{r^*}(\bm{\mu}_T) + 12 \sqrt{\frac{\log\left(\frac{c(\log^2 r^*+1)}{\delta}\right)}{r^*}} \\ &+ 9 \Delta_{r^*} = O( Q(r^*)) \enspace .
\end{align*}

Equipped with \eqref{eq:relation-optimal}, We will now show that $U(r_j) = O( Q(r^*))$ for both cases $(a)$ and $(b)$.

Case $(a)$. Since $i \in L$, we have $U(r_j) \leq 5 U(r_i)$ due to \Cref{prop:stop-condition}. We conclude by using equation \eqref{eq:relation-optimal}.

Case $(b)$. The algorithm chooses the window size $r_j < r_i$. This means that when the algorithm considers the next element $\ell \in L$ after $j$, where $\ell \leq i$, there exists an index $\ell' \in L$ with $\ell' \leq j$ such that the condition of Line~5 of the algorithm is satisfied. \Cref{prop:stop-condition} applies, and we have that $U(r_{\ell'}) \leq 2U(r_{n})$ for any $n \geq \ell > j$. By construction, this is also true for $n$ equal to $i$, and  $U(r_{\ell'})  \leq 2U(r_{i})$. On the other hand, since the algorithm returned $r_j$, it means that all the If conditions on Line~5 must have been satisfied when considering $j \in L$, and \Cref{prop:continue-condition} gives us $U(r_j) \leq 5 \min_{z \leq j : z \in L} U(r_z)$, and in particular $U(r_j) \leq U(r_{\ell'})$. Thus, $U(r_j) \leq 10 U(r_i)$. We obtain the statement by using equation \eqref{eq:relation-optimal}.
\end{proof}

\section{RELATED WORK}
The problem of learning with distribution drift was introduced in the context of binary classification \citep{helmbold1991tracking, bartlett1992learning, helmbold1994tracking}. This line of research led to the result that if any two consecutive distributions have a bounded $L_1$ distance $\Delta$, the expected error for learning a family of binary functions with VC dimension $\nu$ is $O( (\nu \Delta)^{1/3})$ \citep{long1998complexity}, and the upper bound is tight \citep{barve1996complexity}. Under mild assumptions, \cite{mohri2012new} extend the analysis of agnostic learning with drift to any family of functions. In particular, they provide an upper bound to the learning error for a given window size that uses the Rademacher complexity to quantify the statistical error, and a problem-dependent upper bound to the drift error called discrepancy that is based on previous work in domain adaptation \citep{mansour2009domain,ben2010theory}. Recent work relaxes the independence assumption and provides learning bounds under mixing condition \citep{hanneke2019statistical}. This previous work either requires a priori knowledge of the drift error to solve the trade-off between statistical error and drift error, or assumes that multiple samples can be obtained from each distribution to estimate the drift error \citep{mohri2012new, awasthi2023theory}. There are two work that provide an adaptive algorithm for learning a family of functions with drift: \cite{hanneke2015learning} address the realizable case, and \cite{mazzetto2023adaptive} address the agnostic case. 

Recent work characterizes the minimax error for the problem of learning discrete and continuous smooth distributions with distribution drift, but it assumes a prior knowledge of the drift to attain this error \citep{mazzetto2023nonparametric}. In a more specific setting, \cite{gokcesu2017online} provide an adaptive algorithm for learning a parametric family of exponential densities, where the parameters can slowly drift over time. 
We finally point out that several algorithms have been proposed for estimating a density in the online setting \citep{kristan2011multivariate, garcia2012online}, but they do not provide an analysis of the estimation error. 

\section{CONCLUSION}
We provide an adaptive algorithm for the problem of learning a drifting discrete distribution. Unlike previous work, our method solves this problem for \emph{any} drifting discrete distribution, and it does not require any prior assumption on the support of the distribution. Additionally, our algorithm utilizes the input data to estimate the statistical error, and it can provide a tighter bound than existing methods depending on the complexity of the drifting distribution.
To the best of our knowledge, this is the first adaptive method for learning a discrete distribution to use data-dependent bounds in a drift setting.

\subsubsection*{Acknowledgements}
The authors would like to thank Eli Upfal for the helpful discussions. This material is based on research sponsored by the National Science Foundation (NSF) under award IIS-1813444, and by a Kanellakis Fellowship.

\bibliographystyle{apalike}
\bibliography{file}

\begin{thebibliography}{}

\bibitem[Anthony et~al., 1999]{anthony1999neural}
Anthony, M., Bartlett, P.~L., Bartlett, P.~L., et~al. (1999).
\newblock {\em Neural network learning: Theoretical foundations}, volume~9.
\newblock Cambridge University Press.

\bibitem[Awasthi et~al., 2023]{awasthi2023theory}
Awasthi, P., Cortes, C., and Mohri, C. (2023).
\newblock Theory and algorithm for batch distribution drift problems.
\newblock In {\em Proc. {AISTATS}}, pages 9826--9851.

\bibitem[Bartlett, 1992]{bartlett1992learning}
Bartlett, P.~L. (1992).
\newblock Learning with a slowly changing distribution.
\newblock In {\em Proc. {COLT}}, pages 243--252.

\bibitem[Barve and Long, 1996]{barve1996complexity}
Barve, R.~D. and Long, P.~M. (1996).
\newblock On the complexity of learning from drifting distributions.
\newblock In {\em Proc. {COLT}}, pages 122--130.

\bibitem[Ben-David et~al., 2010]{ben2010theory}
Ben-David, S., Blitzer, J., Crammer, K., Kulesza, A., Pereira, F., and Vaughan, J.~W. (2010).
\newblock A theory of learning from different domains.
\newblock {\em Machine Learning}, 79:151--175.

\bibitem[Berend and Kontorovich, 2013]{berend2013sharp}
Berend, D. and Kontorovich, A. (2013).
\newblock A sharp estimate of the binomial mean absolute deviation with applications.
\newblock {\em Statistics \& Probability Letters}, 83(4):1254--1259.

\bibitem[Cohen et~al., 2020]{cohen2020learning}
Cohen, D., Kontorovich, A., and Wolfer, G. (2020).
\newblock Learning discrete distributions with infinite support.
\newblock In {\em Proc. {NeurIPS}}, pages 3942--3951.

\bibitem[Devroye and Gy{\"o}rfi, 1987]{Devroye1987NonparametricDE}
Devroye, L. and Gy{\"o}rfi, L. (1987).
\newblock Nonparametric density estimation : the l[1] view.
\newblock {\em Journal of the American Statistical Association}, 82:344.

\bibitem[Devroye and Lugosi, 2001]{devroye2001combinatorial}
Devroye, L. and Lugosi, G. (2001).
\newblock {\em Combinatorial methods in density estimation}.
\newblock Springer Science \& Business Media.

\bibitem[Fu et~al., 2020]{fu2020fast}
Fu, D., Chen, M., Sala, F., Hooper, S., Fatahalian, K., and R{\'e}, C. (2020).
\newblock Fast and three-rious: Speeding up weak supervision with triplet methods.
\newblock In {\em Proc. {ICML}}, pages 3280--3291.

\bibitem[Garc{\'\i}a-Trevi{\~n}o and Barria, 2012]{garcia2012online}
Garc{\'\i}a-Trevi{\~n}o, E.~S. and Barria, J.~A. (2012).
\newblock Online wavelet-based density estimation for non-stationary streaming data.
\newblock {\em Computational statistics \& data analysis}, 56(2):327--344.

\bibitem[Gokcesu and Kozat, 2017]{gokcesu2017online}
Gokcesu, K. and Kozat, S.~S. (2017).
\newblock Online density estimation of nonstationary sources using exponential family of distributions.
\newblock {\em IEEE Transactions on Neural Networks and Learning Systems}, 29(9):4473--4478.

\bibitem[Han et~al., 2015]{han2015minimax}
Han, Y., Jiao, J., and Weissman, T. (2015).
\newblock Minimax estimation of discrete distributions.
\newblock In {\em 2015 IEEE International Symposium on Information Theory (ISIT)}, pages 2291--2295. IEEE.

\bibitem[Hanneke et~al., 2015]{hanneke2015learning}
Hanneke, S., Kanade, V., and Yang, L. (2015).
\newblock Learning with a drifting target concept.
\newblock In {\em Proc. {ALT}}, pages 149--164.

\bibitem[Hanneke and Yang, 2019]{hanneke2019statistical}
Hanneke, S. and Yang, L. (2019).
\newblock Statistical learning under nonstationary mixing processes.
\newblock In {\em Proc. {AISTATS}}, pages 1678--1686.

\bibitem[Helmbold and Long, 1991]{helmbold1991tracking}
Helmbold, D.~P. and Long, P.~M. (1991).
\newblock Tracking drifting concepts using random examples.
\newblock In {\em Proc. {COLT}}, pages 13--23.

\bibitem[Helmbold and Long, 1994]{helmbold1994tracking}
Helmbold, D.~P. and Long, P.~M. (1994).
\newblock Tracking drifting concepts by minimizing disagreements.
\newblock {\em Machine Learning}, 14:27--45.

\bibitem[Jiao et~al., 2015]{jiao2015minimax}
Jiao, J., Venkat, K., Han, Y., and Weissman, T. (2015).
\newblock Minimax estimation of functionals of discrete distributions.
\newblock {\em IEEE Transactions on Information Theory}, 61(5):2835--2885.

\bibitem[Kamath et~al., 2015]{kamath2015learning}
Kamath, S., Orlitsky, A., Pichapati, D., and Suresh, A.~T. (2015).
\newblock On learning distributions from their samples.
\newblock In {\em Proc. {COLT}}, pages 1066--1100.

\bibitem[Kristan et~al., 2011]{kristan2011multivariate}
Kristan, M., Leonardis, A., and Sko{\v{c}}aj, D. (2011).
\newblock Multivariate online kernel density estimation with gaussian kernels.
\newblock {\em Pattern recognition}, 44(10-11):2630--2642.

\bibitem[Long, 1998]{long1998complexity}
Long, P.~M. (1998).
\newblock The complexity of learning according to two models of a drifting environment.
\newblock In {\em Proc. {COLT}}, pages 116--125.

\bibitem[Mansour et~al., 2009]{mansour2009domain}
Mansour, Y., Mohri, M., and Rostamizadeh, A. (2009).
\newblock Domain adaptation: Learning bounds and algorithms.
\newblock In {\em Proc. {COLT}}.

\bibitem[Mazzetto and Upfal, 2023a]{mazzetto2023adaptive}
Mazzetto, A. and Upfal, E. (2023a).
\newblock An adaptive algorithm for learning with unknown distribution drift.
\newblock In {\em Proc. {NeurIPS}}.

\bibitem[Mazzetto and Upfal, 2023b]{mazzetto2023nonparametric}
Mazzetto, A. and Upfal, E. (2023b).
\newblock Nonparametric density estimation under distribution drift.
\newblock In {\em Proc. {ICML}}, pages 24251--24270.

\bibitem[Mohri and Mu{\~n}oz~Medina, 2012]{mohri2012new}
Mohri, M. and Mu{\~n}oz~Medina, A. (2012).
\newblock New analysis and algorithm for learning with drifting distributions.
\newblock In {\em Proc. {ALT}}, pages 124--138.

\bibitem[Orlitsky and Suresh, 2015]{orlitsky2015competitive}
Orlitsky, A. and Suresh, A.~T. (2015).
\newblock Competitive distribution estimation: Why is good-turing good.
\newblock In {\em Proc. {NeurIPS}}.

\bibitem[Silverman, 1986]{silverman1986density}
Silverman, B.~W. (1986).
\newblock {\em Density estimation for statistics and data analysis}, volume~26.
\newblock CRC press.

\end{thebibliography}

\appendix
\onecolumn

\section{DEFERRED PROOFS}

\begin{proof}[Proof of Proposition~\ref{ub:statistical-error-empirical}]
The proof of this proposition is based on previous work in the literature. The first inequality of the proposition immediately follows from the proof of \citet[Theorem~2.1]{cohen2020learning}. Specifically, with probability at least $1-\delta/2$, it holds:
\begin{align}
\label{eq:42-first}
    \lVert  \emur{r} - \mur{r}\rVert_{\mathrm{TV}} \leq \Phi_r\left( \emur{r} \right) + 3 \sqrt{ \frac{\log(4/\delta)}{2r}} \enspace .
\end{align}

The proof of the second inequality of this proposition proceeds as follows. By invoking Fubini's theorem, we have that
\begin{align*}
    \Exp \Phi( \emur{r}) = \frac{1}{\sqrt{r}}\Exp\left(  \sum_{i=1}^\infty \sqrt{ \emur{r}(i)} \right) = \frac{1}{\sqrt{r}}  \sum_{i=1}^\infty \Exp\left(\sqrt{ \emur{r}(i)} \right) = \frac{1}{r}  \sum_{i=1}^\infty \Exp\left(\sqrt{r \cdot \emur{r}(i)} \right) \enspace .
\end{align*}
Consider a value $i \in \mathbb{N}$, and let $C_i = r  \emur{r}(i)$. The crucial observation is that
\begin{align*}
\Exp C_i = r \Exp\left(\frac{1}{r}\sum_{t=T-r+1}^T \mathbf{1}_{ \{ X_t = i \}} \right) = \sum_{t=T-r+1}^T \Exp \mathbf{1}_{ \{ X_t = i \}}  = \sum_{t=T-r+1}^T \bm{\mu}_t(i) = r \mur{r}(i) \enspace .
\end{align*}

The remaining of the proof follows the same argument in previous work \citep{cohen2020learning}. Since the square root is a concave function, we have that $\Exp \sqrt{C_i} \leq \sqrt{\Exp C_i}$ by using Jensen's inequality. Furthermore, we can exploit that $\sqrt{C_i} \leq C_i$ as $C_i \in \{0,1,\ldots,r\}$, to show $\Exp \sqrt{C_i} \leq \Exp C_i$. We obtain:
\begin{align*}
 \Exp \Phi( \emur{r}) &= \frac{1}{r} \sum_{i=1}^\infty \Exp \sqrt{C_i} \\ &\leq \frac{1}{r} \sum_{i=1}^\infty  \min\left\{ r\mur{r}(i), \sqrt{r \mur{r}(i)}   \right\} \\ &= \sum_{i : \mur{r}(i) \leq 1/r} \mur{r}(i)  + \frac{1}{\sqrt{r}}\sum_{i : \mur{r}(i) > 1/r} \mur{r}(i) \\
 &= \Lambda_{r}(\mur{r}) \enspace .
\end{align*}
Since changing a single sample among $\{X_{T-r+1},\ldots,X_T\}$ can change the value of $\Phi( \emur{r})$ by at most $2/r$, we can use McDiarmid's inequality to show that with probability at least $1-\delta/2$, it holds that:
\begin{align}
\label{eq:42-second}
     \Phi( \emur{r}) \leq \Exp \Phi( \emur{r}) + \sqrt{\frac{\log(2/\delta)}{r}} \enspace .
\end{align}
We conclude by taking a union bound so that both events \eqref{eq:42-first} and \eqref{eq:42-second} hold with probability at least $1-\delta$.
\end{proof}

\end{document}